\newcommand{\cmark}{\ding{51}} 
\newcommand{\xmark}{\ding{55}} 
\title{ExPO: Unlocking Hard Reasoning with Self-Explanation-Guided Reinforcement Learning}
\author{Ruiyang Zhou$^\spadesuit$ \\
University of Texas at Austin \\
\texttt{ruiyang.zhou@utexas.edu}
\And
Shuozhe Li$^\spadesuit$ \\
University of Texas at Austin \\
\texttt{shuozhe.li@utexas.edu}
\And
Amy Zhang \\
University of Texas at Austin \\
\And
Liu Leqi$^\spadesuit$ \\
University of Texas at Austin \\
\texttt{leqiliu@utexas.edu}\thanks{$\spadesuit$: Leading Contributors (Equal Contribution).}}
\begin{document}

\maketitle

\begin{abstract}
Self-improvement via RL often fails on complex reasoning tasks because GRPO-style post-training methods rely on the model's initial ability to generate positive samples. Without guided exploration, these approaches merely reinforce what the model already knows (distribution-sharpening) rather than enabling the model to solve problems where it initially generates no correct solutions.
To unlock reasoning ability in such settings, the model must explore new reasoning trajectories beyond its current output distribution. 
Such exploration requires access to sufficiently good positive samples to guide the learning.

While expert demonstrations seem like a natural solution, we find that they are often ineffective in RL post-training. 
Instead, we identify two key properties of effective positive samples: they should (1) be likely under the current policy, and (2) increase the model’s likelihood of predicting the correct answer. Based on these insights, we propose \textbf{Self-Explanation Policy Optimization (ExPO)}—a simple and modular framework that generates such samples by conditioning on the ground-truth answer. It can be integrated with popular RL training methods like GRPO and DPO. 
ExPO enables efficient exploration and guides the model to produce reasoning trajectories more aligned with its policy than expert-written CoTs, 
while ensuring higher quality than its own (incorrect) samples.
Experiments show that ExPO improves both learning efficiency and final performance on reasoning benchmarks, surpassing expert-demonstration-based methods in challenging settings such as MATH level-5, where the model initially struggles the most.
Code available in \url{https://github.com/HumainLab/ExPO_rl_reasoning_by_explanation}.
\end{abstract}

\begin{figure}[h]
\centering
\includegraphics[width=\linewidth]{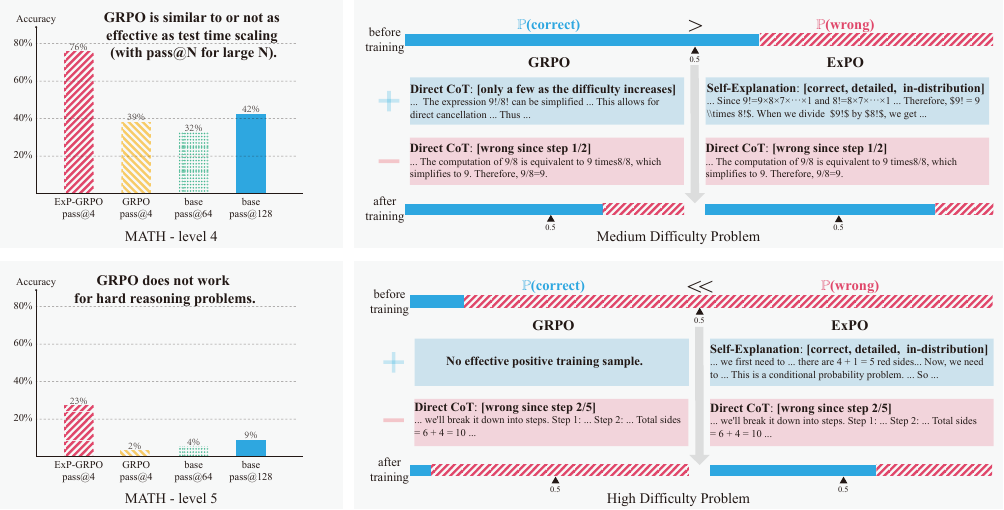}
\caption{Illustration of the problem and our proposed solution, ExPO. 
On the left, the bar plots of models (with the base model being Qwen2.5-3B-Instruct) evaluated on the MATH dataset, highlighting the issue with GRPO-style methods: they primarily strengthen the model’s existing capabilities rather than enabling new ones. 
On the right, we present the positive and negative samples of both GRPO and our proposed ExPO method for MATH level-4 (top) and level-5 (bottom). ExPO is more effective than GRPO in guiding the model to learn for hard reasoning tasks.}
\label{fig:intro-drawing}
\end{figure}

\section{Introduction}

Recent advances in large language models (LLMs) have significantly advanced the frontier of artificial general intelligence, with models demonstrating increasingly strong performance on complex reasoning tasks in mathematics, programming, and scientific domains. A pivotal factor in this progress is the development of reinforcement learning (RL)-style post-training methods, which optimize model outputs using reward signals derived from human preferences, model comparisons, or automated verifiers~\citep{jaech2024openai, shao2024deepseekmath, team2025kimi, Setlur2024-jf}. These methods promote coherent, multi-step reasoning—commonly referred to as chain-of-thought (CoT)~\citep{jaech2024openai, guo2025deepseek}—and now underpin many state-of-the-art models in both alignment and reasoning.

RL-style post-training encompasses a broad family of algorithms, including reward-maximizing reinforcement learning (e.g., GRPO)~\citep{ouyang2022training, bai2022training}, contrastive preference optimization methods~\citep{rafailov2023direct, rrhf, sppo, yuancommon}, and supervised fine-tuning on expert demonstrations or high-quality self-generated samples~\citep{gulcehre2023reinforced}. 
Despite differences in algorithmic formulation, all these methods share a central mechanism: increasing the likelihood of generating preferred/positive responses and reducing that of dispreferred/negative ones.

While negative samples are typically abundant during RL post-training, positive samples are scarce---especially in hard reasoning tasks. 
Thus, a key challenge remains unresolved: \textit{how to obtain effective positive training samples in settings where the model's initial success rate is low}? 
In practice, positive samples are either drawn from expert demonstrations, which are costly to obtain for complex domains (e.g., MATH level-5~\citep{hendrycks2024measuring}), or self-generated by the model. 
Yet for hard problems, the model often fails to produce any correct answers initially, leading to a pathological regime in GRPO-style training: the advantage term vanishes, the KL term dominates, and the model regresses—an effect sometimes described as ``unlearning''~\citep{he2025rewarding, zhao2025echochamberrlposttraining}.

Existing work attempts to circumvent this by discarding training examples with all incorrect responses~\citep{yu2025dapo, xiong2025minimalist}. However, this merely sidesteps the core issue. 
GRPO-style methods fundamentally assume high-quality samples are likely to be sampled under the model’s current policy—an assumption that breaks down in high-difficulty regimes (Figure~\ref{fig:intro-drawing}). 
In other words, these methods exhibit a \textbf{distribution-sharpening bias}: 
they sharpen the model’s output distribution around high-probability correct responses, but struggle to guide learning on tasks where correct answers are unlikely under the current model. 
This limitation reflects a broader issue noted in recent work~\citep{yue2025doesreinforcementlearningreally}: current RL-style post-training often reinforces existing capabilities rather than fostering fundamentally stronger reasoning abilities beyond those of the base model. 

This motivates a fundamental question:

\begin{quote}
    \textbf{How can we effectively guide learning for challenging reasoning tasks in RL-style post-training, when positive samples are scarce?}
\end{quote}

Rather than focusing on suppressing negative samples—which offers little benefit when the model lacks any useful priors—we argue that progress hinges on identifying and synthesizing \textit{effective positive samples}. 
This paper therefore investigates two core questions:

\begin{enumerate}[leftmargin=*]
    \item \textbf{How can we obtain positive samples when the base model's initial success rate is low?}
    \item \textbf{What properties must such samples possess to provide strong learning signals in RL-style post-training?} 
\end{enumerate}

\paragraph{Our Approach: ExPO.}
To answer these questions, we begin by identifying two essential properties of ideal positive training samples:
\begin{itemize}[leftmargin=*]
    \item \textbf{In-distribution}: The sample should have a high probability under the current policy to guide learning effectively. %
    \item \textbf{Positive learning signal}: The sample should increase the likelihood of the correct answer compared to the model’s current CoT. %
\end{itemize}

These criteria inform our proposed method, \textbf{ExPO} (Self-\textbf{Ex}planation \textbf{P}olicy \textbf{O}ptimization), a modular framework for generating and integrating positive samples via self-explanation. Specifically, ExPO conditions the model on both the question and the final answer to generate plausible reasoning chains, which are then used as positive samples for RL-style training.

Our key insight is that conditioning on the ground-truth answer helps the model produce in-distribution reasoning traces that are more aligned with its current policy than expert-written CoTs, while providing better guidance than its own incorrect completions. 
These self-explanations are more elaborate and contain more correct steps than standard CoTs, serving as a form of natural \emph{process supervision}—as their relatively small deviation from standard CoTs helps the model identify where and how to adjust its reasoning. 
Because they are more likely under the current policy, these self-explanations guide the model to explore more effectively than expert CoTs—enabling the model to learn in settings previously dismissed as unlearnable.

ExPO is broadly applicable and can be instantiated in both contrastive (e.g., DPO) and reward-based (e.g., GRPO) frameworks. Across both domains, we demonstrate that ExPO improves sample efficiency, accelerates learning, and significantly boosts performance—especially on difficult questions where prior methods fail. Notably, ExPO not only removes the need for expert CoTs, but also \textit{outperforms} approaches that rely on them.

\section{Related work}
\textbf{RL methods for training LLM reasoning ability.}
Reinforcement Learning methods are increasingly showing promise in training for complex reasoning tasks. The simple but effective Direct Preference Optimization (DPO) algorithm \citep{rafailov2023direct} is used for reasoning and other human preference alignment tasks on a pair of positive and negative generations. 
There are variants to better apply DPO on reasoning tasks, for example, the iterative DPO method \citep{pang2024iterative, tu2025enhancing} iteratively applies DPO to make it online; the self-training DPO method \citep{wang2024self} iterates between SFT and DPO. On the other hand, the Group Relative Policy Optimization (GRPO) method \citep{shao2024deepseekmath} uses only the outcome verifier to train strong reasoning models. 
Without the need for process supervision signal (reasoning chain-of-thought), it is possible to train on a large scale of tasks. 
We incorporate our method ExPO with DPO and GRPO to further improve their training performance on challenging reasoning tasks.

\textbf{RL methods for self-improvement without expert-labeled CoT.}
Given the challenge of acquiring expert-labeled CoT for complex reasoning tasks, self-improvement through reinforcement learning has become a key area of research. Current approaches include reinforcement learning from AI feedback \citep{lee2023rlaif, guo2024direct, wu2024meta}, which uses a powerful ``judge'' model to evaluate responses; methods with iterative self-correction and self-refinement, where a model refines its own generation \citep{chen2024self, qu2024recursive}; and the use of self-generated training data \citep{zelikman2024star, guan2025rstar}. Our work contributes to this third category, where the model generates the reasoning steps as signals for its own training.

One important paper in this domain is STaR \citep{zelikman2024star}, which proposes to use self-generated chain-of-thought during finetuning on reasoning tasks. In the absence of expert-labeled CoT, they prompt the model to regenerate rationales for incorrect predictions given the correct answer. While STaR is limited to SFT training and direct prompting, subsequent work proposes various methods for better self-training with RL. Their methods include using model rules, implicit signals, or external verifiers \citep{hoffman2023training, hosseini2024v, huang2022large, tong2024optimizing, zhang2024rest}. However, while these approaches demonstrate practical success, they do not explain why STaR-style methods are so effective. Our paper proposes a more general theoretical analysis on why STaR-style methods can work and sometimes even surpass methods that use expert-labeled CoT, particularly under the RL framework.

\section{Characteristics on the Ideal Positive Training Sample}

We focus on settings where the model struggles to generate positive samples on its own—though it can still produce negative ones (e.g., incorrect self-generated responses), as commonly seen in hard reasoning tasks.
In such cases, effective positive training samples are essential to guide the model’s exploration and reduce the sample complexity of learning. 
To tackle this challenge, we first analyze what makes a positive training sample effective in the context of policy/preference optimization for reasoning tasks (Section~\ref{sec:in-distribution}, \ref{sec:positive-learning-signal}).
We then demonstrate how our proposed method for generating positive samples satisfies these ideal properties (Section \ref{sec:pos-data}).
Finally, we introduce ExPO (self-Explanation-based Policy Optimization), along with instantiations using different base $^\star$PO algorithms such as DPO and GRPO (Section~\ref{sec:expo}).

\subsection{Problem Setup}\label{sec:ideal-data}

When training a large reasoning model, 
our (implicit) end goal is the following:
\begin{align}\label{eq:overaching-obj}
    \max_\theta \mathbb{E}_{(q, a^\star) \sim \mathcal{D}, ({c},a) \sim \pi_\theta(\cdot | q)}\left[r(q, {c}, a, a^\star)\right],
\end{align}
where $\mathcal{D}$ is a distribution of question $q$ and ground-truth answer $a^\star$ pairs.
The model $\pi_\theta$ generates a pair of CoT $\bm{c}$ and answer $a$,
and the verifiable reward is often given by whether the answer is correct, i.e., $r(q, \bm{c}, a, a^\star) = \mathbb{I}\{a = a^\star\}$.
Though many algorithms 
(e.g., $^\star$PO algorithms like GRPO, DPO, PPO, etc.) 
have been proposed to 
train reasoning models with various learning objectives,
implicitly, our end goal is Eq.~\eqref{eq:overaching-obj}
as indicated by how we have evaluated the performance of the 
trained model $\pi_\theta$: we evaluate them directly on their correctness on reasoning tasks instead of evaluating them on their learning objectives.
For simplicity, 
we denote the objective in Eq.~\eqref{eq:overaching-obj} as $\bm{J}(\theta)$.

To understand the effect of data on the training process across different algorithms, we consider two complementary types of analysis:  
{Policy improvement}\cite{sutton1998reinforcement}, where we use gradient-based analysis to examine how different samples contribute to optimizing the overarching objective Eq.~\eqref{eq:overaching-obj}; and  
{Probability shifts} \citep{fujimoto2019off}, where we analyze directly how the policy $\pi_\theta$ changes in response to different types of training samples (and that these probability shifts are closely tied to policy improvement).

Drawing insights from both analyses, we identify two key properties that characterize ideal positive training samples:
(1) They must be \textbf{in-distribution} with respect to the current policy; and
(2) They must be {better than negative samples} in achieving the task at hand (which we will provide a more precise definition for) and thus provide \textbf{positive learning signal}.

\subsection{Property 1: In-distribution}\label{sec:in-distribution}%

Consider a generic gradient $\bm{g}(\bar{q}, \bar{c}, \bar{a})$ obtained 
with a sample (question, CoT, answer) pair $(\bar{q}, \bar{c}, \bar{a})$.
For now, 
we keep the gradient definition abstract: 
it can be derived from any policy/preference optimization algorithm, 
and the sample used for its computation can be 
obtained on-policy or off-policy.
In this setting, after taking a gradient ascent\footnote{Depending on the exact $^\star$PO algorithm (minimizing or maximizing the learning objective), it can be either a descent or ascent step.} step with learning rate $\alpha >0$, 
we have that $\theta_{t+1} = \theta_t + \alpha \bm{g}(\bar{q}, \bar{c}, \bar{a})$,
and 
approximate the change in the objective~\eqref{eq:overaching-obj} using a first-order Taylor expansion:
\begin{align}
    \Delta \bm{J} =  \bm{J}(\theta_{t+1}) - \bm{J}(\theta_t) \approx \alpha \nabla_\theta \bm{J}(\theta)^\top \bm{g}(\bar{q}, \bar{c}, \bar{a}).
\end{align}
\label{eq:objective}

The key to understand the effect of training data $(\bar{q}, \bar{c}, \bar{a})$
on the policy improvement $\Delta \bm{J}$
is thus to identify the alignment between the true gradient $\nabla_\theta \bm{J}(\theta)$
and the sample gradient $\bm{g}(\bar{q}, \bar{c}, \bar{a})$ used in training.
Given the definition in~\eqref{eq:overaching-obj}, we have the true gradient to be 
\begin{align}\label{eq:ideal-gradient}
    \nabla_\theta J(\theta) = \mathbb{E}_{(q, a^\star) \sim \mathcal{D}, (c,a) \sim \pi_\theta(\cdot | q) }\left[ \nabla_\theta \log \pi_\theta(c,a|q) r(q,c,a, a^\star) \right],
\end{align}
which suggests that 
\begin{align}\label{eq:inner-prod}
    \nabla_\theta J(\theta)^\top \bm{g}(\bar{q}, \bar{c}, \bar{a})
    =& \mathbb{E}_{(q, a^\star) \sim \mathcal{D}, (c,a) \sim \pi_\theta(\cdot | q) }\left[ \bm{g}(\bar{q}, \bar{c}, \bar{a})^\top \nabla_\theta \log \pi_\theta(c,a|q) r(q,c,a, a^\star) \right] \nonumber \\
    =& \sum_{q, c, a} \mathbb{P}(q) \pi_\theta(c,a|q) \bm{g}(\bar{q}, \bar{c}, \bar{a})^\top \nabla_\theta \log \pi_\theta(c,a|q) r(q,c,a, a^\star). 
\end{align}

Since the ground-truth answer $a^\star$ is deterministic with respect to a question $q$, we omit it in taking the sum and take $\mathbb{P}(q, a^\star) = \mathbb{P}(q)$. 
As discussed in \citet{shao2024deepseekmath} Appendix A, 
given the training data $(\bar{q}, \bar{c}, \bar{a})$, 
the gradient of $^\star$PO algorithms %
often takes the form: 
\begin{align}\label{eq:used-gradient}
    \bm{g}(\bar{q}, \bar{c}, \bar{a}) = w \nabla_\theta \log \pi_\theta(\bar{c}, \bar{a} | \bar{q}),
\end{align}
for some weighting  $w \in \mathbb{R}$.  
The weighting depends on the \emph{relative quality} of the data. When a sample is considered ``positive''---that is, it performs better than other samples in the batch (e.g., it achieves a higher objective value or is correct)---the weighting is positive (\( w > 0 \)). 
Conversely, samples that perform worse has the weighting to be \( w < 0 \).\footnote{%
For certain algorithms (e.g., PPO), the weighting is applied per-token,
i.e., the weighting is different for each of the summand in $\nabla_\theta \log \pi_\theta(\bar{c}, \bar{a} | \bar{q}) = \sum_t \nabla_\theta \log \pi_\theta(\bar{c}_t | \bar{q}, \bar{c}_{<t}) + \nabla_\theta \log \pi_\theta(\bar{a} | \bar{q}, \bar{c})$.
For illustration simplicity, we keep a fixed weighting for each token in a (response) trajectory, but the general properties we illustrate below apply to settings with token-dependent weighting.
}

From~\eqref{eq:inner-prod}, we obtain that 
\begin{align}\label{eq:inner-product-expand}
    \nabla_\theta &\bm{J(}\theta)^\top \bm{g}(\bar{q}, \bar{c}, \bar{a})
    = \underbrace{w \mathbb{P}(\bar{q}) \pi_\theta(\bar{c}, \bar{a}|\bar{q})  \|\nabla_\theta \log \pi_\theta(\bar{c},\bar{a}|\bar{q})\|^2 r(\bar{q}, \bar{c}, \bar{a}, a^\star) \nonumber}_{T_1\text{: when }(q,c,a) = (\bar{q}, \bar{c}, \bar{a})}\\
    &\; + \underbrace{\sum_{q, c, a \neq (\bar{q}, \bar{c}, \bar{a})} w  \mathbb{P}(q) \pi_\theta(c,a|q)  \nabla_\theta \log \pi_\theta(\bar{c},\bar{a}|\bar{q})^\top  \nabla_\theta \log \pi_\theta(c,a|q) r(q,c,a, a^\star)}_{T_2\text{: when }(q,c,a) \neq (\bar{q}, \bar{c}, \bar{a})}. 
\end{align}

In the following, 
we first argue that the magnitude of the policy improvement $\Delta \bm{J}$ (and the alignment between the true gradient $\nabla_\theta \bm{J}(\theta)$ and the actual used gradient $\bm{g}(\bar{c}, \bar{a}, \bar{q})$)
is dominated by the term $T_1$. %
Thus, the training data $(\bar{q}, \bar{c}, \bar{a})$ will only be effective
in improving the policy if \textbf{$\pi_\theta(\bar{c},\bar{a}|\bar{q})$ is large}, i.e., the training sample $(\bar{q}, \bar{c}, \bar{a})$ is ``\textbf{in-distribution}'' with respect to policy $\pi_\theta$.

To be more precise, we characterize when $T_2$ (the sum of cross terms $\langle\nabla_\theta \log \pi_\theta(\bar{c}, \bar{a}|\bar{q}), \nabla_\theta \log \pi_\theta(c, a| q)\rangle$ in \eqref{eq:inner-product-expand}) %
is negligible. 
Intuitively, these gradients lie in extremely high-dimensional spaces (e.g., on the order of billions of dimensions when performing full parameter updates, even on relatively small reasoning models). As a result, unless the questions and corresponding CoTs are highly similar, it is unlikely that the cross-term gradient inner products contribute significantly. 
In a simplified setting where the logits of the softmax policy $\pi_\theta$ have orthogonal gradients with equal norm, we show that for training samples to which the model assigns lower probability, i.e., $\pi_\theta(\bar{c}, \bar{a} | \bar{q}) < \frac{1}{2}$, the cross-term gradient inner products decrease as the sample becomes less likely. We formalize this below. 
\begin{lemma}\label{lemma:cross-term}
Let $\mathcal{T} = \{(q_j, c_j, a_j)\}_{j=1}^L$ be a finite set, and the policy being a softmax policy over this set, i.e.,
$
\pi_j := \pi_\theta(c_j, a_j|q_j) = {\exp(z_j)}/{\sum_{l=1}^L \exp(z_l)} \text{ where } z_j := f_\theta(q_j, c_j, a_j).
$ 
Assume the following conditions hold:
(1) For all $j \ne l$, the gradients of the logits are orthogonal: $\langle \nabla_\theta z_j, \nabla_\theta z_l \rangle = 0$.
(2) All logits have the same gradient norm: $\|\nabla_\theta z_j\|^2 = C > 0$ for all $j \in [L]$. 
Then for any pair $j \ne j'$, 
the cross-term gradient inner product 
$\left\langle \nabla_\theta \log \pi_j, \nabla_\theta \log \pi_{j'} \right\rangle$ is strictly decreasing in $\pi_j$ if and only if $\pi_j < \tfrac{1}{2}$.
\end{lemma}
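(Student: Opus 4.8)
The plan is to reduce the claim to a one-variable calculus exercise by first computing the cross-term inner product in closed form. Write $z_l := f_\theta(q_l,c_l,a_l)$ and $\pi_l = e^{z_l}/\sum_m e^{z_m}$; the softmax identity gives $\nabla_\theta \log\pi_j = \nabla_\theta z_j - \sum_{l=1}^{L}\pi_l\,\nabla_\theta z_l$. Expanding $\langle \nabla_\theta\log\pi_j,\nabla_\theta\log\pi_{j'}\rangle$ for $j\ne j'$ and applying assumption~(1) ($\langle\nabla_\theta z_a,\nabla_\theta z_b\rangle=0$ for $a\ne b$) together with~(2) ($\|\nabla_\theta z_a\|^2=C$), all inner products between distinct logit gradients vanish and one is left with
\begin{equation}\label{eq:cross-closed-form}
\big\langle \nabla_\theta\log\pi_j,\ \nabla_\theta\log\pi_{j'}\big\rangle \;=\; C\Big(\textstyle\sum_{l=1}^{L}\pi_l^2 \;-\; \pi_j \;-\; \pi_{j'}\Big).
\end{equation}
(Sanity check: for $L=2$ the right-hand side is $-2C\pi_j\pi_{j'}$, the familiar form.) So everything hinges on the monotonicity of the scalar $h(\pi_j):=\sum_l\pi_l^2-\pi_j-\pi_{j'}$.

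The second step is to fix what ``varying $\pi_j$'' means and then differentiate. I would perturb only the logit $z_j$, keeping the other logits (and, along the perturbation, the structural conditions~(1)--(2)) fixed; then $\pi_j$ sweeps $(0,1)$ monotonically and the residual mass $1-\pi_j$ is shared among the other $L-1$ states in fixed proportions. In the symmetric instantiation these proportions are uniform --- which also covers $L=2$, where there is only one other state --- so $\pi_{j'}=(1-\pi_j)/(L-1)$ and $\sum_l\pi_l^2=\pi_j^2+(1-\pi_j)^2/(L-1)$. Substituting into $h$ and differentiating, the $1/(L-1)$ pieces collapse neatly:
\[
h'(\pi_j)\;=\;\Big(1+\tfrac{1}{L-1}\Big)(2\pi_j-1)\;=\;\tfrac{L}{L-1}\,(2\pi_j-1).
\]
Since $C>0$ and $L/(L-1)>0$, the derivative of~\eqref{eq:cross-closed-form} in $\pi_j$ is negative precisely for $\pi_j<\tfrac12$, zero at $\pi_j=\tfrac12$, and positive for $\pi_j>\tfrac12$. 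Hence the cross-term inner product is strictly decreasing in $\pi_j$ if and only if $\pi_j<\tfrac12$, and by the symmetry of~\eqref{eq:cross-closed-form} the same holds in $\pi_{j'}$.

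The main obstacle is conceptual, not algebraic: in~\eqref{eq:cross-closed-form} both $\sum_l\pi_l^2$ and $\pi_{j'}$ co-move with $\pi_j$, so the answer depends on \emph{how} the policy is perturbed, and it is exactly the uniform/residual-proportions choice (equivalently, the two-outcome coarse-graining of ``state $j$ versus the rest'') that produces the clean threshold $\tfrac12$ that the lemma quotes. I would therefore be explicit about this parametrization, noting it is the regime in which the lemma is applied and that the qualitative message --- once a sample is already in the minority, the cross terms shrink as it becomes less likely --- is robust to it. The only loose end to tidy up is the boundary $\pi_j=\tfrac12$, where $h'=0$, so ``strictly decreasing'' fails there and the ``if and only if'' region is the open set $\{\pi_j<\tfrac12\}$.
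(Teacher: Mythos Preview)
Your closed-form derivation matches the paper's exactly: expanding the two log-softmax gradients and applying the orthogonality and equal-norm assumptions collapses everything to $C\big(\sum_l\pi_l^2-\pi_j-\pi_{j'}\big)$. The divergence is purely in how you differentiate. The paper reads ``decreasing in $\pi_j$'' as a \emph{partial}-derivative statement: it holds every $\pi_l$ with $l\ne j$ (including $\pi_{j'}$) fixed and differentiates the closed form directly, obtaining $\tfrac{\partial}{\partial\pi_j}\big[C(\sum_l\pi_l^2-\pi_j-\pi_{j'})\big]=C(2\pi_j-1)$ in one line, with no symmetric instantiation and no decision about how residual mass redistributes. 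You instead parametrize along a curve in the simplex (perturb only $z_j$, share the residual uniformly), which respects the constraint $\sum_l\pi_l=1$ and, as you correctly diagnose, makes the threshold depend on the perturbation path---your uniform choice is precisely what recovers $\tfrac12$. Your concern is legitimate and the paper simply sidesteps it. The paper's reading is algebraically shorter and matches the lemma as literally stated (no symmetry hypothesis, ``for any pair $j\ne j'$''); yours is more careful about what it means to move one coordinate of a probability vector, at the cost of an extra assumption the lemma does not carry.
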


The above lemma implies that when the sample \( (\bar{q}, \bar{c}, \bar{a}) \) is assigned low probability under the current policy (i.e., when \( \pi_\theta(\bar{c}, \bar{a} | \bar{q}) \) is small), the cross-term contributions in \( T_2 \) become negligible. 
Simultaneously, the leading term \( T_1 \) in~\eqref{eq:inner-product-expand} is also small in magnitude, as it scales proportionally with \( \pi_\theta(\bar{c}, \bar{a} | \bar{q}) \). 
As a result, such samples contribute minimally to overall policy improvement.

In summary, the policy improvement $\Delta \bm{J}$ will be large on 
a gradient step with training data $(\bar{q}, \bar{c}, \bar{a})$
when $\pi_\theta(\bar{c}, \bar{a} | \bar{q})$ is high---that is, when the sample is likely under the current policy---and when the corresponding reward is also high. 
We now formalize the first ideal property of a positive training sample: being in-distribution under the current policy. 

\begin{property}[\textbf{In-distribution}]\label{property:in-distribution}
    A sample $(\bar{c}, \bar{a})$ is considered more {in-distribution} relative to another sample $(c, a)$ for a given question $q$ if its probability under the current policy $\pi_\theta$ is higher, i.e., 
    \[
    \pi_\theta(\bar{c}, \bar{a} | q) > \pi_\theta(c, a | q).
    \]
\end{property}
This is a relative criterion, not an absolute one, reflecting the fact that in practice, when determining which samples in a given batch are more in-distribution, we compare how likely each sample is under the current policy. 
Samples with high probability and high reward can contribute more effectively to policy improvement. In practice, we want the training sample $(\bar{q}, \bar{c}, \bar{a})$ to have $\pi_\theta(\bar{c}, \bar{a} | \bar{q})$ sufficiently far from zero, and not too far from $\max_{c, a} \pi_\theta(c, a | \bar{q})$. This ensures that the sample is in-distribution enough to meaningfully contribute to policy improvement.

\subsection{Property 2: Positive Learning Signal}\label{sec:positive-learning-signal}%

In the above, we already see the importance of having in-distribution data: gradients for samples $(\bar{q}, \bar{c}, \bar{a})$ (with positive rewards) where $\pi_\theta(\bar{c}, \bar{a} | \bar{q}) \gg 0$ enable more effective policy improvement. In the following analysis, we examine which in-distribution samples should be given positive rewards and receive positive learning signals to increase their probability. 

Consider a batch containing two training samples for the same question:
$
\{(q, c_1, a_1),\ (q, c_2, a_2)\}.
$
When \( a_1 \) is correct and \( a_2 \) is not, it is clear that the correct one should be preferred, i.e.,
$
(q, c_1, a_1) \succ (q, c_2, a_2), 
$
and thus receive the positive reward to increase its probability.
However, in settings where both \( a_1 \) and \( a_2 \) are incorrect (as is often the case in model training for challenging reasoning tasks), 
which sample should be treated as the positive/preferred one to increase the probability?

We argue that the notion of preference between responses in general should be defined in terms of their ability to increase the likelihood of the correct answer \( a^\star \). Specifically, we propose the following criterion:

\begin{property}[Positive learning signal]\label{property:positive}
A sample \( (q, c_1, a_1) \) has a positive learning signal compared to \( (q, c_2, a_2) \), and thus should be preferred, i.e.,
$
(q, c_1, a_1) \succ (q, c_2, a_2),
$ 
if and only if
\[
\pi_\theta(a^\star | q, c_1) > \pi_\theta(a^\star | q, c_2).
\]
\end{property}
We note that this property is also defined in a {relative} sense: 
it allows us to compare two chain-of-thoughts, 
but does not assign absolute quality to any individual sample. 
In the specific case of a batch containing two samples, if \( (q, c_1, a_1) \) carries a positive learning signal relative to \( (q, c_2, a_2) \), 
then training algorithms should assign the weighting $w$ for the gradient  \( \bm{g}(q, c_1, a_1) \) to be positive (hence, positive gradient), 
and the weighting $w$ for the gradient \( \bm{g}(q, c_2, a_2) \) to be negative \eqref{eq:used-gradient}.

{Why should we increase the probability of \( (q, c_1, a_1) \) and decrease that of \( (q, c_2, a_2) \), if \( (q, c_1, a_1) \) possess a positive learning signal as defined in Property~\ref{property:positive}?} 
For a fixed question and ground truth answer pair \( (q, a^\star) \), consider our objective defined in~\eqref{eq:overaching-obj}:

\begin{align*}
    \bm{J}(\theta; q, a^\star) 
    &= \sum_{a, c} \pi_\theta(c | q)\, \pi_\theta(a | q, c)\, \mathbb{I}\{a = a^\star\} = \sum_{c} \pi_\theta(c | q)\, \pi_\theta(a^\star | q, c).
\end{align*}

This objective reflects the total probability mass assigned to generating the correct answer \( a^\star \) through all reasoning paths \( c \).  
Now, if \( \pi_\theta(a^\star | q, c_1) > \pi_\theta(a^\star | q, c_2) \), then increasing \( \pi_\theta(c_1 | q) \) and decreasing \( \pi_\theta(c_2 | q) \) would increase the overall objective \( \bm{J}(\theta; q, a^\star) \). 
Therefore, adjusting the probabilities in this way directly improves the model’s ability to generate the correct answer---justifying the preference defined in the property.
In the Appendix, we provide a detailed analysis of how increasing or decreasing certain probabilities under a softmax policy influences others (based on their relative value), showing that choosing the wrong samples to increase/decrease probability can lead to unintended effects.

\section{Positive Training Sample Generation through Self-Explanation}\label{sec:pos-data}

As discussed, %
the ideal positive training samples should satisfy two properties: \textbf{Property~\ref{property:in-distribution}} (in-distribution), which ensures the samples to have relatively high probability under the current training policy $\pi_\theta$, and \textbf{Property~\ref{property:positive}} (positive learning signal), which specifies that within a batch, the sample whose CoT makes the ground truth answer more probable---relative to the CoTs of other samples---should receive a positive gradient, i.e., its probability should be up-weighed.

In scenarios where self-generated responses (CoT, answer pairs) perform poorly (e.g., on challenging tasks where the model has not yet learned to perform well), 
a surprisingly simple method can yield positive samples that satisfy the \textbf{in-distribution} and \textbf{positive learning signal} properties---generate a \textbf{self-explanation conditioned on the correct answer}:
\begin{align}\label{eq:c-tilde-defn}
    \tilde{\bm{c}} \sim \pi_\theta (\cot | q, a^\star),
\end{align}
and use $(q, \tilde{\bm{c}}, a^\star)$ as a positive sample. 
This idea resembles the sampling strategy proposed in STAR~\citep{zelikman2024star}\footnote{We arrived at this sampling strategy independently, and only later recognized its resemblance to prior work.}. 
It is worth noting that the criteria for ideal positive training samples (\textbf{Property \ref{property:in-distribution},~\ref{property:positive}}) are more general than
this specific sampling approach. 
As long as a sample is probable under the training policy 
and leads towards the correct answer more likely on average than the self-generated CoT, then the sample is ideal for providing positive learning signals. 
For example, beyond generating the sample conditioned on the answer, 
one can generate it by conditioning on a partial expert CoT or with additional hints.

\subsection{Self-Explanation: Natural Guidance for RL Post-training}\label{subsec:self-expl}

We begin by showing an example of self-explanation and
provide some intuition on why self-explanations are suitable positive training samples in RL post-training. 
Example 1 shows a self-explanation for a hard math problem. For this problem, conditioning the model on the correct answer enables it to successfully identify the pivotal step and guide the reasoning process correctly. 
Compared to the model's standard CoT, the self-explanation often contains more correct reasoning steps. 
Besides, the self-explanation's phrasing is more consistent with the model's own language, offering a clearer contrast to its incorrect responses.

Intuitively, self-explanation reduces the task difficulty by providing the correct answer as a known condition, shifting the challenge from open-ended problem-solving to generating conditional explanations. 
Because of this, the self-generated explanation has better quality while having a relatively small deviation from the incorrect standard CoT. 
These properties seem well-suited for RL-style training, which, in our understanding, is more geared toward shaping the existing behavior of the base model rather than instilling entirely new knowledge. 
Therefore, self-explanation helps the model identify where and how to adjust its reasoning, functioning as an effective and natural form of process supervision. 

{\small
\begin{tcolorbox}[
    colback=cyan!5!white,   %
    colframe=black!80!white,  %
    title=Example 1: Standard CoT vs self-explanation for a question from MATH,  
    coltitle=white,          %
    fonttitle=\bfseries,     %
    boxrule=0.8mm,             %
    arc=5mm,                 %
    width=\textwidth,        %
    enlarge top by=1mm,      %
    enlarge bottom by=1mm,   %
    sharp corners
]
    \textbf{Question}: A Conditional probability question... Given that the side you see is red, what is the probability that the other side is red?

    \textbf{Standard CoT}: 
    
    \cmark [Correct step 1] Step 1: Count the total number of sides that are red...
    
    \xmark [Redundant step 2] Step 2: Determine the total number of sides in the box...

    \xmark [Wrong computation in step 3] Step 3: Calculate the probability that the other side is red given that the side you see is red...

    \textbf{Self-Explanation}:

    \cmark [Correct step 1] To solve this problem, we first need to find the total number of red sides on all the cards in the box...
    
    \cmark [Correct step 2] Now, we need to find the probability that the other side is red given that the side we see is red. This is a conditional probability problem...
    
    \cmark [Explain the answer again] The answer is based on the idea of conditional probability...

\end{tcolorbox}
}

In fact, if the base model still fails to generate better reasoning traces with the guidance given by the correct answer, the problem is likely too difficult for the RL post-training approach. 
Compared to supervised fine-tuning (SFT), RL training is not as efficient at teaching new knowledge to the model. 
The objective of SFT is to maximize the probability of the target response, 
so that the knowledge in the response can be memorized. 
In contrast, the RL post-training objective is to make the positive/chosen response more likely than the negative/rejected one, 
which provides a weaker learning signal for memorizing complex, response-relevant new knowledge. 
Besides, the KL term in the RL post-training objective %
suggests that the trained model always stays close to the base model. 
Therefore, if the model is unable to articulate improved reasoning even when the correct answer is known, it is unlikely that reward-driven learning alone will yield meaningful gains in reasoning capability—at least not in an efficient manner.

\subsection{Comparing Self-Explanation with Expert CoT and Standard CoT}
To more formally understand why self-explanations~\eqref{eq:c-tilde-defn} serve as ideal positive samples, we compare them—both empirically and analytically—with two other types of data: the expert CoT $\bm{c}_E$ and the original self-generated CoT $\bm{c}$. 
Self-generated explanations satisfy both ideal properties (Table~\ref{tab:property-comparison}). 
In contrast, expert CoT lacks Property 1 and, empirically, provides less effective guidance for model learning (Section \ref{sec:experiment}). 
Compared to standard CoT, self-explanations provide positive learning signals. 
We elaborate on these findings below.

\begin{table}[h]
\centering
\begin{tabular}{lcc}
\toprule
\textbf{Method} & \textbf{In-distribution} & \textbf{Positive Learning Signal} \\
\midrule
Ours (self-explanation)     & \checkmark & \checkmark \\
Expert CoT                & \xmark     & \checkmark \\
Standard CoT              & \checkmark & \xmark \\
\bottomrule
\end{tabular}
\caption{Comparison of different training data with respect to the ideal properties.}
\label{tab:property-comparison}
\end{table}

\begin{wrapfigure}{r}{0.4\textwidth}
\centering
\includegraphics[width=\linewidth]{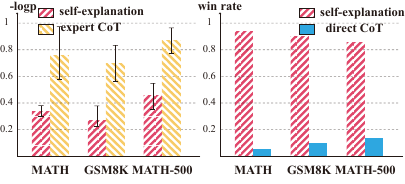}
\caption{ 
    Left: Negative log-likelihood of the self-explanation $\tilde{\bm{c}}$
    and expert CoT.
    Right: Winrate labelled by GPT-4o in terms the number of correct steps of the self-explanation $\tilde{\bm{c}}$
    and self-generated CoT $\bmc$. Both on the test split of each dataset using Qwen2.5 3B-Instruct.
    }
\vspace{-2em}
\label{fig:nll-winrate}
\end{wrapfigure}

First, regarding the \textbf{in-distribution} Property~\ref{property:in-distribution}, we observe that, compared to the expert CoT $\bm{c}_E$, the self-generated explanation $\tilde{\bm{c}}$ is more likely to be generated under the current policy and thus more in-distribution for training. As shown in Figure~\ref{fig:nll-winrate}, the self-generated explanation has a significantly lower negative log-likelihood than the expert CoT. 
Surprisingly---at least empirically---training with expert CoT can sometimes result in performance worse than the performance of models trained using self-generated explanations (as shown in Section~\ref{sec:experiment}). This observation aligns with our gradient-based analysis in Section~\ref{sec:ideal-data}. In general, the distribution of $\tilde{\bmc} \sim \pi(\cdot | q, a^\star)$ is close to the original CoT distribution $\bmc \sim \pi(\cdot | q)$, 
since the prompts used to generate these two distributions differ by only a small number of tokens, which describe the ground-truth answer $a^\star$.

Second, for the \textbf{positive learning signal} Property~\ref{property:positive}, as illustrated in Example 1 above—and consistently observed across the self-explanation data we have generated—self-generated explanations tend to be more detailed and contain more correct reasoning steps than the original self-generated CoT. 
This observation is further supported empirically by prompting GPT-4o to evaluate the relative quality of $\tilde{\bm{c}}$ and $\bm{c}$---GPT-4o consistently rated $\tilde{\bm{c}}$ significantly higher in quality (Figure~\ref{fig:nll-winrate}).

This intuition is also formalized in the following lemma:

\begin{lemma}\label{lemma:expected-positive-signal}
On average, the self-generated explanation is more likely to lead to the ground-truth answer than the original self-generated chain-of-thought. That is,
\[
\mathbb{E}_{\tilde{\bm{c}} \sim \pi_\theta(\tilde{\bm{c}} | q, a^\star)} \left[ \pi_\theta(a^\star | q, \tilde{\bm{c}}) \right]
\geq 
\mathbb{E}_{\bm{c} \sim \pi_\theta(\bm{c} | q)} \left[ \pi_\theta(a^\star |  q, \bm{c}) \right].
\]
\end{lemma}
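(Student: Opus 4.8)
The plan is to reduce the inequality to the nonnegativity of a variance. I would fix the pair $(q, a^\star)$ throughout and introduce the shorthand $p(\bm{c}) := \pi_\theta(\bm{c}\mid q)$ and $X(\bm{c}) := \pi_\theta(a^\star \mid q, \bm{c}) \in [0,1]$. With this notation the right-hand side is exactly $\mathbb{E}_{\bm{c}\sim p}[X(\bm{c})] = \sum_{\bm{c}} p(\bm{c}) X(\bm{c}) = \pi_\theta(a^\star \mid q)$. The first step is to make precise that $\pi_\theta(\cdot \mid q, a^\star)$ in~\eqref{eq:c-tilde-defn} denotes the conditional induced by the model's joint distribution $\pi_\theta(\bm{c}, a \mid q) = \pi_\theta(\bm{c}\mid q)\,\pi_\theta(a\mid q,\bm{c})$; then Bayes' rule gives $\pi_\theta(\bm{c}\mid q, a^\star) = p(\bm{c})\,X(\bm{c}) / \pi_\theta(a^\star\mid q)$, valid whenever $\pi_\theta(a^\star\mid q) > 0$ (the only case in which conditioning on $a^\star$ is meaningful).

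Substituting this reweighting into the left-hand side, I would compute
\[
\mathbb{E}_{\tilde{\bm{c}} \sim \pi_\theta(\cdot \mid q, a^\star)}\!\left[ X(\tilde{\bm{c}}) \right]
= \sum_{\bm{c}} \frac{p(\bm{c})\,X(\bm{c})}{\pi_\theta(a^\star\mid q)}\, X(\bm{c})
= \frac{\mathbb{E}_{\bm{c}\sim p}[X(\bm{c})^2]}{\mathbb{E}_{\bm{c}\sim p}[X(\bm{c})]},
\]
so that the claimed inequality becomes $\mathbb{E}_p[X^2]\big/\mathbb{E}_p[X] \ge \mathbb{E}_p[X]$, i.e., $\mathbb{E}_p[X^2] \ge (\mathbb{E}_p[X])^2$. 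This is precisely $\mathrm{Var}_p(X) \ge 0$ (equivalently Jensen's inequality for the convex map $t\mapsto t^2$, or Cauchy–Schwarz applied to the vectors $\sqrt{p}\,X$ and $\sqrt{p}$), which always holds; it is strict unless $X(\bm{c})$ is $p$-almost-surely constant, i.e., unless the CoT has no effect on the probability of the correct answer. Conditioning on $a^\star$ thus produces exactly the $X$-size-biased reweighting of the CoT distribution, which is what raises the mean of $X$.

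There is no substantial obstacle here; the only care needed is bookkeeping. If the set of reasoning traces $\bm{c}$ is infinite, I would note that $X(\bm{c}) \le 1$ makes all the series absolutely convergent, so the rearrangements above are justified. The single genuine edge case, $\pi_\theta(a^\star\mid q) = 0$, I would dispense with by assumption, since then the conditioning event has probability zero and $\pi_\theta(\cdot \mid q, a^\star)$ is undefined (and the right-hand side is $0$, so the statement holds vacuously under any convention placing the left-hand side in $[0,1]$). The conceptual content of the lemma—that self-explanation satisfies Property~\ref{property:positive} on average—is entirely captured by the identification of $\pi_\theta(\cdot\mid q, a^\star)$ with the success-probability-weighted version of $\pi_\theta(\cdot\mid q)$.
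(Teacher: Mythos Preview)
Your proof is correct and follows essentially the same route as the paper: rewrite the conditioned distribution via Bayes' rule as the $X$-size-biased version of $\pi_\theta(\cdot\mid q)$, reduce the inequality to $\mathbb{E}[X^2]\ge(\mathbb{E}[X])^2$, and invoke Jensen/variance nonnegativity. Your treatment is in fact slightly more thorough, as you explicitly handle the edge case $\pi_\theta(a^\star\mid q)=0$ and the equality condition, which the paper's proof omits.
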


To summarize, the self-generated explanation $\tilde{\bm{c}}$ outperforms the expert CoT $\bm{c}_E$ with respect to the in-distribution property (though it is weaker in terms of the positive learning signal property), while it surpasses the self-generated CoT $\bm{c}$ in terms of the positive learning signal property but not the in-distribution one.  
Thus, our proposed positive sample---the self-generated explanation $\tilde{\bm{c}}$---achieves a balance between these two desirable properties.  
In the following, we describe how to leverage these generated positive samples within RL-type training for reasoning tasks.

\section{ExPO and its instantiations on DPO and GRPO}\label{sec:expo}
Building on our method for generating in-distribution positive samples (Section~\ref{sec:pos-data}) and the theoretical foundations established in Section~\ref{sec:ideal-data}, we instantiate our approach within two widely used policy/preference optimization frameworks: Direct Preference Optimization (DPO) and Group Relative Policy Optimization (GRPO). This leads to two practical algorithms—\textbf{ExP-DPO} and \textbf{ExP-GRPO}—that integrate explanation-based data augmentation into the learning pipeline.

\subsection{ExP-DPO}
We design and evaluate two variants of ExP-DPO: an offline and an iteratively online version.
In the offline DPO setting, all explanations $\tilde{\bm{c}}$ are generated by the initial policy at the beginning of the training, i.e. $\tilde{\bm{c}}_+ \sim \pi_\text{ref}(\cdot | q, a^\star)$. They serve as the preferred (winning) response throughout training. The dispreferred (losing) response $\bm{c}_-$ is sampled from the same initial policy, conditioned only on the question, i.e. $\bm{c}_- \sim \pi_{\text{ref}}(\cdot | q)$. 

\textbf{Observation 1.}
In the offline setting, we implement ExP-DPO using either the expert-provided CoT $\bm{c}_E$ or the model-generated explanation $\tilde{\bm{c}}$ as the winning response (see Section~\ref{sec:experiment}). 
However, training with $\bm{c}_E$ often results in deceptively low loss values, yet fails to yield strong downstream performance. This occurs because the expert CoT typically has a much lower log-probability compared to the self-generated CoT (the losing response). 
As a result, during training, even a minor increase in the expert's log-probability can suffice to classify it as the correct answer (i.e. the model prefers the correctly labeled expert), without meaningfully increasing the expert CoT's log-probability relative to that of the self-generated one.

\textbf{Observation 2.} While being effective at the beginning, the offline scheme eventually suffers from distributional drift: as the policy $\pi_\theta$ updates, the fixed explanation $\tilde{\bm{c}}$ becomes increasingly out-of-distribution, violating the in-distribution criterion outlined in Property~\ref{property:in-distribution}. 

These observations correspond with our theoretical analysis: out-of-distribution positive samples can yield ineffective or even detrimental learning signals. To address this, we introduce the iteratively online ExP-DPO, in which a fresh explanation $\tilde{\bm{c}}_+$ is regenerated after updating $\pi^i_\theta$ over a large enough batch of data, i.e. $\tilde{\bm{c}}_+ \sim \pi^i_\theta(\cdot | q, a^\star)$. This formulation ensures that the winning sample remains in-distribution of the evolving model, thereby providing a consistently strong learning signal. The training objective for iteratively online ExP-DPO is:

\[
    \mathcal{L}^i_{\text{DPO}}(\pi_\theta^i, \pi_{\text{ref}}) = 
    -\mathbb{E}_{\substack{
    q \sim D 
    \\ \tilde{\bm{c}}_+ \sim \pi_\theta^i(\cdot|s,a^\star)
    \\ (\bm{c}_-, a_-) \sim \pi_\theta^i(\cdot|s)} 
    }
    \left[ 
        \log \sigma \left( 
            \beta \log \frac{\pi_\theta^i(\tilde{\bm{c}}_+, a^\star | q)}{\pi_{\text{ref}}(\tilde{\bm{c}}_+, a^\star | q)} 
            - \beta \log \frac{\pi_\theta^i(\bm{c}_-, a_- | q)}{\pi_{\text{ref}}(\bm{c}_-, a_- | q)} 
        \right) 
    \right]
    \]

By switching to the online ExP-DPO setting where $\tilde{\bm{c}}$ is iteratively generated from the up-to-date policy, we obtain improved learning efficiency and higher performance ceilings. This confirms the importance of maintaining the training sample (both positive and negative) to be in-distribution throughout policy optimization.

\subsection{ExP-GRPO}

The Group Relative Policy Optimization (GRPO) method derives its learning signal by sampling responses from the policy model itself. However, this approach has a critical limitation: when all sampled responses are incorrect, the model receives no effective gradient signal. 
In such cases, training degenerates into minimizing the KL divergence term alone, which fails to guide the model toward better performance. This issue is particularly acute when the sampled questions are too difficult for the model to generate valid intermediate reasoning steps. Prior works have circumvented this problem by removing the KL divergence term or excluding such challenging examples from training altogether \cite{yu2025dapo, llama4a, wang2025reinforcement}, but they stop short of addressing the fundamental question—how can a model learn to solve problems it currently fails at? Our method provides a principled mechanism to overcome this limitation via the generated self-explanation $\tilde{\bm{c}}$ and 
enables the model to explore and learn even when initial sampled responses are incorrect, thereby unlocking the potential of the model to improve on previously unsolvable instances.

In our design, we address such issues by introducing an ExP-SFT term. Specifically, we use the initially generated $\tilde{\bm{c}}$ as the CoT that leads to the correct answer. Then, we reconstruct the GRPO objective as:

\begin{equation}
\resizebox{\textwidth}{!}{$
\mathcal{J}_{\text{GRPO}}(\theta)
= \mathbb{E}_{
                    \substack{(q, a^{\star}) \sim \mathcal{D},
                    \\ \{ o_i \}_{i=1}^{G} \sim \pi_{\theta_{\text{old}}}(\cdot | q)
                    \\ \tilde{\bm{c}} \sim \pi_{\theta}(\cdot | q,a^{\star})
                    }
                }
\nonumber 
\Bigg[
    \frac{1}{G} \sum_{i=1}^{G} \frac{1}{|o_i|} \sum_{t=1}^{|o_i|} 
    \min\left(
        r_{i,t}(\theta) \hat{A}_{i,t},\ 
        \mathrm{clip}(r_{i,t}(\theta), 1 - \varepsilon, 1 + \varepsilon) \hat{A}_{i,t}
    \right)
\nonumber 
+ \fcolorbox{red}{white}{$\beta \log \pi_\theta(\tilde{\bm{c}},a^{\star} | q)$}
\Bigg
]
$}
\end{equation}

\textbf{Key Observations.}
In Section~\ref{sec:experiment}, we empirically demonstrate that the ExP-SFT term directly addresses the core challenge of missing learning signals in difficult reasoning settings—where correct responses are exceedingly rare under the initial policy. By injecting in-distribution, task-relevant supervision through $\tilde{\bm{c}}$, our method activates the trial-and-error learning loop that is otherwise stalled, enabling the model to make meaningful progress even on instances previously considered unlearnable. 
Moreover, we show that the addition of the ExP-SFT term significantly enhances sample efficiency and allows the policy to reach a higher performance ceiling than baselines without such augmentation. 
Further analysis reveals that the majority of the performance gains originate from the harder questions in the test set. 
This suggests that the reasoning capabilities acquired via the ExP-SFT term not only improve training efficiency but also improve models' reasoning capability on more challenging, unseen instances. 

We can interpret the ExP-SFT term as a form of natural process supervision—it increases the likelihood of generating self-explanations \( \tilde{\bm{c}} \) while decreasing that of the self-generated CoTs \( \bm{c} \). 
The relatively small deviation between these two reasoning traces helps the model pinpoint where and how to adjust. 
This mechanism is particularly valuable for challenging tasks, where the model may initially fail. 
In such cases, it is crucial to provide effective positive learning signals that introduce new reasoning traces or knowledge. 
ExP-SFT provides these signals and guides the model to explore in the right direction, leading to greater sample efficiency and performance ceiling compared to GRPO.

\section{Experiments} \label{sec:experiment}

To verify the research question outlined in introduction and instantiate our method for generating in-distribution positive samples, we present empirical results demonstrating the effectiveness of ExPO in providing strong on-policy learning signals. We begin by evaluating our method in the context of ExP-DPO, showing that using $\bm{c}_{E}$ as winning response in offline DPO can lead to falsely low loss. On the other hand, ExP-DPO avoids misleadingly low loss values and its online version achieves better sample efficiency and performance upper bound. 

We then turn to a more challenging scenario: reinforcement learning with GRPO \cite{shao2024deepseekmath}, where the initial policy struggles to generate meaningful responses and fails to produce informative learning signals. In this regime, our method ExP-GRPO proves especially valuable—it jump-starts the learning process by guiding the policy early on, thereby igniting the trial-and-error loop that is critical for sustained improvements in reasoning. %

\textbf{Models and training settings.} We conduct preference optimization experiments using two families of models: LLaMA-3.2 \cite{grattafiori2024llama} and Qwen-2.5 \cite{yang2024qwen2}. Training and evaluation are performed on two widely used mathematical reasoning benchmarks: MATH \cite{hendrycks2024measuring} and GSM8K \cite{cobbe2021training}. The self-explanation of ExP-GRPO training is published \footnote{\url{https://huggingface.co/datasets/humainlab/MATH-self-explanation}}. Other experimental details are in Appendix~\ref{appendix:exp-setting}.

\subsection{Results on ExP-DPO}

\begin{table}[ht]
\centering %
\resizebox{0.9\textwidth}{!}{
\begin{tabular}{@{}ccccccccc@{}}
\toprule
        & \multicolumn{2}{c}{\textbf{\begin{tabular}[c]{@{}c@{}}Offline\\ LLaMA3.2-3B-Instruct\end{tabular}}}
        & \multicolumn{2}{c}{\textbf{\begin{tabular}[c]{@{}c@{}}Online\\ LLaMA3.2-3B-Instruct\end{tabular}}}
        & \multicolumn{2}{c}{\textbf{\begin{tabular}[c]{@{}c@{}}Offline\\ Qwen2.5-3B-Instruct\end{tabular}}}
        & \multicolumn{2}{c}{\textbf{\begin{tabular}[c]{@{}c@{}}Online\\ Qwen2.5-3B-Instruct\end{tabular}}} \\ \midrule
Version & $\tilde{\bmc}_+$ & $\bmc_{E_+}$  & $\tilde{\bmc}_+$ & $\bmc_{E_+}$  & $\tilde{\bmc}_+$ & $\bmc_{E_+}$  & $\tilde{\bmc}_+$ & $\bmc_{E_+}$ \\ 
\midrule
MATH  & 45.7             & 38.7        & 50.2             & 53.6        & 54.3                 & 43.7             & 60.4               & 49.3       \\
GSM8K  & 71.2             & 63.7        & 81.5             & 74.4        & 80.1                 & 69.6             & 85.4               & 76.3       \\
\bottomrule
\end{tabular}
}
\caption{Pass@4 performance on MATH and GSM8K with models trained with ExP-DPO $\tilde{\bm{c}}_+$ and expert CoT $\bm{c}_{E+}$. ExP-DPO consistently outperformed $\bmc_E$ trained policy in both online and offline settings.}
\label{dpo-table}
\end{table}

We evaluate two variants of ExP-DPO: an offline and an iteratively updated online version. %
For offline,  
we first compare the impact of using expert CoTs ($\bm{c}_E$) versus self-generated explanations ($\tilde{\bm{c}}$) as the preferred response. As shown in Table~\ref{dpo-table}, training with $\bm{c}_E$ leads to only marginal improvements in downstream performance. This observation aligns with our earlier analysis: although expert CoTs are semantically correct, they often lie in low-probability regions under the current policy $\pi_\theta$, violating the in-distribution criterion (Property~\ref{property:in-distribution}). Consequently, they induce misaligned gradient signals. Moreover, from visualization of its loss (see Appendix for details), we observe that even minimal increases in the log-probability of $\bm{c}_E$ are sufficient to yield a near-zero loss, creating the illusion of rapid convergence. However, the underlying model remains poorly learned, resulting in minimal performance gains relative to using self-generated explanations $\tilde{\bm{c}}$ as positive samples.

To further investigate this issue, we transition to the online ExP-DPO setting, where the preferred explanation $\tilde{\bm{c}}$ is regenerated periodically from the current policy. Although the online adjustment removes the misleading low-loss effect associated with static expert responses, ExP-DPO still achieves faster learning and consistently higher final accuracy. These improvements underscore the importance of maintaining distributional alignment between positive training samples and the evolving policy. By continuously adapting $\tilde{\bm{c}}$ to reflect the model's current beliefs, ExP-DPO ensures that learning remains anchored in the regions where the model can most effectively improve—thereby enabling robust optimization without relying on external expert demonstrations.

\subsection{Results on ExP-GRPO}

\begin{figure}[tbp]
\centering
\includegraphics[width=\linewidth]{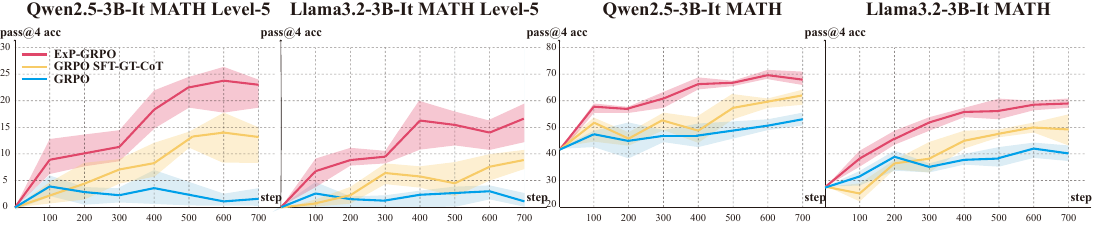}
\caption{Accuracy on the \textbf{level-5} questions from MATH test set (left 1, 2) and on the whole test set (right 3, 4) for Qwen2.5-3B-Instruct and LLaMA-3.2-3B-Instruct across global training steps. \textcolor{red}{\textbf{ExP-GRPO}} consistently outperforms both \textcolor{blue}{\textbf{GRPO}} and \textcolor{yellow!80!black}{\textbf{GRPO SFT-GT-CoT}}, the latter uses supervised fine-tuning on expert CoT $\bm{c}_E$. The results show that \textcolor{red}{\textbf{ExP-GRPO}} provides more effective and generalizable learning signals, leading to improved sample efficiency and higher overall performance.}
\label{fig:exp-grpo-4}
\end{figure}

To create scenarios where learning signals are hard to obtain, we picked the difficulty level-5 questions from the MATH dataset. In GRPO training, without extra guidance, these questions are hard for the initial policy to obtain any effective learning signal. In addition, obtaining expert CoT label for hard questions like these are expensive. However, with the help of learning signals from ExPO, the initial policy acquired effective learning signals and soon is able to further improve its performance via online sampling as shown in Figure \ref{fig:exp-grpo-4}. These results indicate our method's effectiveness on providing helpful learning signals on reasoning training set where learning signals are rare.

{\footnotesize
\begin{table}[!ht]
\centering
\begin{tabular}{lccccc}
\toprule
 & \textbf{Level 1} & \textbf{Level 2} & \textbf{Level 3} & \textbf{Level 4} & \textbf{Level 5} \\
\midrule
\# Test Samples           & 437   & 894   & 1131  & 1214  & 1324  \\
\textbf{ExP-GRPO pass@4}       & \textbf{96\%} & \textbf{91\%} & \textbf{86\%} & \textcolor{red}{\textbf{76\%}}$\uparrow$ & \textcolor{red}{\textbf{23\%}}$\uparrow$ \\
GRPO SFT-GT-CoT pass@4        & 95\% & 89\% & 83\% & 65\% & 12\% \\
GRPO pass@4  & 91\% & 84\% & 77\% & 39\% & 2\% \\
Base pass@64            & 97\% & 88\% & 75\% & 32\% & 4\% \\
Base pass@128           & 97\% & 93\% & 80\% & 42\% & 9\% \\
\bottomrule
\end{tabular}
\caption{
Accuracy of different methods (using Qwen2.5-3B-Instruct as the base model) across difficulty levels on the MATH test set.
While all methods perform comparably on easier questions, the performance gap widens dramatically on harder levels. Especially on level-4 and level-5 questions, \textbf{ExP-GRPO} yields substantial gains while the standard GRPO fails to learn.
Moreover, when evaluated under large pass@k settings—commonly used to reveal the full reasoning capacity of LLMs—\textbf{ExP-GRPO} exhibits a substantially broader coverage on difficult questions, effectively harnessing and expanding the model’s latent problem-solving ability beyond the reach of conventional RL methods.}
\label{tab:math_levels}
\end{table}
}

Further more, in training settings, using the whole MATH training set, that feature a mixture of easy and hard questions—each associated with differing availability of learning signals—we find that augmenting GRPO with the ExP-SFT term markedly improves both sample efficiency and the final performance ceiling, as shown in Figure~\ref{fig:exp-grpo-4} (see Appendix for performance on more datasets). This improvement stems from ExP-SFT's ability to consistently inject informative, on-policy learning signals even in regimes where standard policy samples fail to provide meaningful supervision. To better understand the source of these gains, we conduct a breakdown analysis by question difficulty. As summarized in Table~\ref{tab:math_levels}, we observe that the majority of performance improvements arise from the hardest subset of questions in the test set. This suggests that the reasoning capabilities acquired via the ExP-SFT signal not only facilitate more effective policy learning during training, but also generalize robustly to challenging, previously unsolvable instances at test time.

\section{Discussion and Conclusion}
In this paper, we investigate self-improvement RL-based methods for hard reasoning tasks, with a focus on problems that are initially unsolvable by the model even after sampling a large number of times at test time. 
While existing RL methods like GRPO excel at refining what a model already knows, they often fail to guide the model to learn novel reasoning abilities required for difficult questions. 
We address this critical limitation by providing both a theoretical analysis of what constitutes a ``good'' positive sample for learning and a practical method, ExPO, for generating such samples. The in-distribution property and positive learning signal property not only explain the success of our proposed ExPO method without expert CoT annotations, but also clarify the underlying principles behind why prior self-improvement methods like STaR are effective. 
Our proposed ExPO method improves sample efficiency, accelerates learning, and significantly boosts performance. 
It successfully bootstraps learning in challenging scenarios, and even outperforms methods with expert CoT annotations. 
Ultimately, ExPO demonstrates that a model can effectively teach itself without expert-labeled chain-of-thought, paving the way for enhancing language models' complex reasoning ability.

\paragraph{Discussion of broader application.}
While our experiments focus on math reasoning, ExPO's core idea of bootstrapping learning with self-generated, in-distribution samples conditioned on verifiable outcomes applies broadly. For example, in code generation (e.g., Codeforces), correct outputs can be verified via test cases. Given the desired output, the model can be prompted to generate rationale (e.g., pseudocode), enabling ExPO to provide effective training signals. In common-sense reasoning tasks where answers are deterministic, ExPO can generate the multi-step reasoning required to reach the correct answer. More generally, any reasoning task with verifiable rewards (such as physics or scientific QA) can benefit from this approach.
Ultimately, ExPO demonstrates that a model can effectively teach itself without expert-labeled chain-of-thought, paving the way for enhancing language models' complex reasoning ability.

\paragraph{Discussoin of the exploratory role of the advantage-weighted objective.}
ExPO method uses the relative quality as learning signals of the self-generated explanation and direct chain-of-thought. While Lemma 2 guarantees that the better-but-may-be-imperfect self-explanations often provide partial or heuristic reasoning paths that still guides the model to learn, we want to further discuss why the imperfect CoTs will not make the model collapse.
On the one hand, our method does not collapse to blindly imitating these imperfect CoTs. The ExPO training objective includes a reinforcement term with an advantage-weighted update, which continues to explore around these initial CoTs. This ensures that the model does not simply memorize the self-generated explanation $\tilde{c}$, but instead uses it as an anchor for exploration, gradually refining the reasoning policy through trial-and-error.
On the other hand, we could employ an annealing schedule for the $\beta$ coefficient: gradually decrease $\beta$ as training progresses. This would make the model rely less on the SFT term in later stages, effectively "fine-tuning" its bootstrapped knowledge during warm-up while reducing the risk of locking in any emergent hallucinations.

Looking ahead, we see two critical areas for enabling complex reasoning capabilities through self-improvement: refining algorithmic approaches and optimizing data curation strategies. From an algorithmic perspective, while RL-based training methods are currently seen as a promising direction for achieving strong reasoning abilities, further investigation into their limitations and underlying mechanisms is necessary. For instance, the distribution-sharpening phenomenon observed in GRPO highlights new challenges that need to be addressed for developing truly capable models. From a data perspective, our findings unexpectedly show that expert-generated chain-of-thought examples are not always optimal. This suggests that simply increasing the quality of data does not automatically lead to better performance. Therefore, future efforts should go beyond the focus on expert-annotated high-quality data. Instead, tailoring data to the model's current capabilities and designing learning curricula will be crucial for continued model development.


\section*{Acknolwedgement}
This research was supported in part by a research grant from Coefficient Giving. We thank Victor Wang for his valuable discussions and feedback.

\bibliographystyle{plainnat}
\bibliography{neurips_2025.bib}

\appendix
\clearpage
\appendix

\section{Proofs}

\begin{proof}[Proof of Lemma \ref{lemma:cross-term}]
    Let $z_j = f_\theta(q_j, c_j, a_j)$ and define the log-probability gradient as:
\[
\nabla_\theta \log \pi_j = \nabla_\theta z_j - \sum_{l=1}^L \pi_l \nabla_\theta z_l =: \nabla_\theta z_j - \bar{z}.
\]

Then the inner product becomes:
\[
\langle \nabla_\theta \log \pi_j, \nabla_\theta \log \pi_{j'} \rangle = \langle \nabla_\theta z_j - \bar{z}, \nabla_\theta z_{j'} - \bar{z} \rangle.
\]

Expanding the above and using the assumptions, we get:
\begin{align*}
\langle \nabla_\theta \log \pi_j, \nabla_\theta \log \pi_{j'} \rangle
&= \langle \nabla_\theta z_j, \nabla_\theta z_{j'} \rangle
- \sum_l \pi_l \langle \nabla_\theta z_j, \nabla_\theta z_l \rangle
- \sum_l \pi_l \langle \nabla_\theta z_{j'}, \nabla_\theta z_l \rangle \\
&\quad + \sum_{l,m} \pi_l \pi_m \langle \nabla_\theta z_l, \nabla_\theta z_m \rangle \\
&= 0 - \pi_j C - \pi_{j'} C + \sum_{l=1}^L \pi_l^2 C \\
&= C \left( -\pi_j - \pi_{j'} + \sum_{l=1}^L \pi_l^2 \right).
\end{align*}

To study the monotonicity of the gradient inner product in $\pi_j$, we check the gradient:
\[
\frac{d}{d\pi_j} \langle \nabla_\theta \log \pi_j, \nabla_\theta \log \pi_{j'} \rangle
= -C + C \cdot \frac{d}{d\pi_j} \left( \sum_l \pi_l^2 \right)
= -C + 2C \pi_j = C(2\pi_j - 1).
\]

Therefore, the inner product decreases in $\pi_j$ if and only if $\pi_j < \tfrac{1}{2}$.
\end{proof}

\begin{proof}[Proof of Lemma \ref{lemma:expected-positive-signal}]
We begin by expanding the expectation under the left hand side of the inequality:

\begin{align*}
    &\mathbb{E}_{\tilde{\bmc} \sim \pi_\theta(\tilde{\bmc} \mid q, a^*)} \left[ \pi_\theta(a^\star \mid \tilde{\bmc}, q) \right]
= \sum_{\tilde{\bmc}} \pi_\theta(\tilde{\bmc} \mid q, a^\star) \cdot \pi_\theta(a^\star \mid \tilde{\bmc}, q)\\
=& \sum_{\tilde{\bmc}} \frac{\pi_\theta(a^\star \mid \tilde{\bmc}, q) \cdot \pi_\theta(\tilde{\bmc} \mid q)}{\pi_\theta(a^\star \mid q)} \cdot \pi_\theta(a^\star \mid \tilde{\bmc}, q)
= \frac{1}{\pi_\theta(a^\star \mid q)} \sum_{\tilde{\bmc}} \pi_\theta(\tilde{\bmc} \mid q) \cdot \left( \pi_\theta(a^\star \mid \tilde{\bmc}, q) \right)^2\\
=& \frac{1}{\pi_\theta(a^\star \mid q)}\mathbb{E}_{\bmc \sim \pi_\theta(\bmc \mid q)} \left[ \left( \pi_\theta(a^\star \mid \bmc, q) \right)^2 \right],
\end{align*}
where the second equality holds because of Bayes' rule. Note that after applying the Bayes rule, $\pi_\theta(a^\star \mid \tilde{\bmc}, q)$ cannot be directly generated by the language model, but it is a general distribution (the reverse conditional).

Recall that the right hand side of the inequality is:

\begin{align*}
    \mathbb{E}_{\bmc \sim \pi_\theta(\bmc \mid q)} \left[ \pi_\theta(a^\star \mid \bmc, q) \right] = 
    \sum_{\bmc} \pi_\theta(\bmc \mid q) \pi_\theta(a^\star \mid \bmc, q)
    = \pi_\theta(a^\star \mid q).
\end{align*}

Thus, the inequality we want to prove is reduced to comparing between:
\begin{align*}
    \frac{\mathbb{E}[X^2]}{\mathbb{E}[X]} \quad \text{vs.} \quad \mathbb{E}[X]
\quad \text{where } X = \pi_\theta(a^\star \mid \bmc, q)
\end{align*}

By Jensen's inequality (since the square function \( x^2 \) is convex), we have:

\begin{align*}
    \mathbb{E}[X^2] \geq \left( \mathbb{E}[X] \right)^2
\Rightarrow 
\frac{\mathbb{E}[X^2]}{\mathbb{E}[X]} \geq \mathbb{E}[X],
\end{align*}
which completes the proof.
\end{proof}

\section{More discussion on probability shifts}
During RL-type training like GRPO and iterative DPO, 
negative samples are on-policy and have relatively high probability under the current policy. 
During training, 
the probability of these samples are pushed down,
while the probability of the positive samples 
will be pushed up. 
We provide a discussion on 
how pushing up samples with different probability (under the current policy) influences the change of other sample's probabilities
when the policy is a softmax of independent logits.
The key takeaways are:
(1) Pushing up samples that have low probability under the current policy will not make minimal changes to other samples' probabilities (thus the in-distribution property is needed for effective training).
(2) Pushing up and down equal amounts have different effects: the magnitude of changes on other logits depend on the magnitude of the original logits. 

To be more specific,
let \( \pi_k = \frac{e^{z_k}}{\sum_m e^{z_m}} \) be a softmax distribution over logits \( \{z_k\}_{k=1}^n \), and define a perturbation:
\begin{align*}
z_i' = z_i + \Delta_{\text{up}}, \quad z_j' = z_j - \Delta_{\text{down}}, \quad z_k' = z_k \text{ for } k \ne i, j.
\end{align*}
Let \( \alpha = e^{\Delta_{\text{up}}},\ \beta = e^{-\Delta_{\text{down}}} \), and define:
\[
Z = \sum_m e^{z_m}, \quad Z' = \alpha e^{z_i} + \beta e^{z_j} + \sum_{k \ne i,j} e^{z_k}.
\]
Then the change in softmax probabilities \( \Delta \pi_k = \pi_k' - \pi_k \) is given by:
\begin{align*}
\Delta \pi_i &= e^{z_i} \left( \frac{\alpha}{Z'} - \frac{1}{Z} \right), \\
\Delta \pi_j &= e^{z_j} \left( \frac{\beta}{Z'} - \frac{1}{Z} \right), \\
\Delta \pi_k &= e^{z_k} \left( \frac{1}{Z'} - \frac{1}{Z} \right) \quad \text{for } k \ne i,j.
\end{align*}

First, note that $\alpha>1$ and $\beta <1$. 
Thus, $\Delta \pi_i > 0$ and $\Delta \pi_j < 0$.
The change of other logits are relative to their rank:
$\Delta \pi_k$ is proportional to $e^{z_k}$.

Thus, we have:
\begin{itemize}[leftmargin=*]
    \item Increasing \( z_k \) (pushing up sample \( k \)) increases \( \pi_k \) and decreases \( \pi_i \) for all \( i \ne k \).
    \item Decreasing \( z_k \) (pushing down sample \( k \)) decreases \( \pi_k \) and increases \( \pi_i \) for all \( i \ne k \).
    \item The magnitude of these changes is proportional to the product of the original probabilities, implying that updates to high-probability samples have larger impact than those to low-probability candidates.
    \item The effect of only pushing up/down sample probabilities differs from simultaneously pushing up positive samples and pushing down negative ones. Relying solely on negative gradients (i.e., providing signals to decrease the probability of certain samples) will not be effective unless the model already assigns high probability to the correct outputs. This approach is especially ineffective for problems where the model is only partially correct. Conversely, only pushing probabilities up of certain samples can lead to overly aggressive updates.
    Therefore, it’s important to include both appropriate positive and negative samples. 
\end{itemize}

\section{Additional Result}

\subsection{Detailed Experimental settings}\label{appendix:exp-setting}

In the ExP-DPO experiments, we train LLaMA-3.2-3B-instruct and QWEN-2.5-3B-instruct on a single NVIDIA H100 GPU. The optimizer is AdamW with cosine learning rate scheduler and 0.05 warmup ratio, where the maximum learning rate is 5e-7. The training batch size is 16. The basic code frameworks are the trl library \cite{vonwerra2022trl} \url{https://github.com/huggingface/trl} and openr1 \cite{openr1} \url{https://github.com/huggingface/open-r1}. The evaluation is performed under the zero-shot prompting, with the simplest prompt for ``think step by step'' and the answer format. We run the evaluation for $4$ passes and calculate the match via MathVerify \cite{Kydlicek_Math_Verify_Math_Verification} \url{https://github.com/huggingface/Math-Verify}. 

Similarly, in the ExP-GRPO experiments, we fine-tune LLaMA-3.2-3B-instruct and QWEN-2.5-3B-instruct based on the X-R1 \cite{deng2025xr1} \url{https://github.com/dhcode-cpp/X-R1} GRPO trainer on 4 NVIDIA A100 GPUs (80GB each). Training is performed for 3 epochs with a per-device batch size of 3 and a gradient accumulation step size of 8. We adopt the AdamW optimizer with a cosine learning rate schedule, setting the maximum learning rate to 3e-6 and a warmup ratio of 0.1. Mixed-precision training is enabled via bfloat16, and flash attention v2 is used to accelerate attention computation. Gradient checkpointing is applied to reduce memory usage, and training is conducted using the accelerate framework with ZeRO Stage 3 configuration. For both experiments, during training, the generation temperature is set to 0.9, while for evaluation it is set to 0.7.

\subsection{Outputs at Initialization}
The expert-annotated CoT is mathematically dense and concise, demonstrating the solution by expanding and cancelling terms in a single step. In contrast, the self-explanation is more pedagogical and compositional; it first defines the core concept, then explicitly states the key logical insight ($9! = 9 \times 8!$), and finally performs the simplified calculation. This decomposed, principle-first approach provides a better in-distribution learning signal for RL training.

\begin{tcolorbox}[
    colback=cyan!5!white,   %
    colframe=black!80!white,  %
    title=Example of expert-annotated CoT vs explanation for the question from MATH,  
    coltitle=white,          %
    fonttitle=\bfseries,     %
    boxrule=0.8mm,             %
    arc=5mm,                 %
    width=\textwidth,        %
    enlarge top by=1mm,      %
    enlarge bottom by=1mm,   %
    sharp corners
]
    \textbf{Question}:
    
    Compute without using a calculator: $9!/8!$ \\
    
    \textbf{Expert-annotated CoT}:
    
    $9!/8! = \frac{9 \times 8 \times \dots \times 1}{8 \times 7 \times \dots \times 1} = \boxed{9}$ \\
    
    \textbf{Self-explanation}:
    
    The expression $9!/8!$ can be simplified by recognizing that the factorial function $n!$ (n factorial) is defined as the product of all positive integers up to $n$. So $9! = 9 \times 8 \times \dots \times 1$ and $8! = 8 \times 7 \times \dots \times 1$. Therefore, $9! = 9 \times 8!$. When we divide $9!$ by $8!$, we get: $$\frac{9!}{8!} = \frac{9 \times 8!}{8!} = 9$$ Hence, $9!/8! = \boxed{9}$.

\end{tcolorbox}

We also provide the full example of direct CoT vs explanation for the question from MATH in Subsection \ref{subsec:self-expl}.

\begin{tcolorbox}[
    colback=cyan!5!white,   %
    colframe=black!80!white,  %
    title=Full example of direct CoT vs explanation for the question from MATH,  
    coltitle=white,          %
    fonttitle=\bfseries,     %
    boxrule=0.8mm,             %
    arc=5mm,                 %
    width=\textwidth,        %
    enlarge top by=1mm,      %
    enlarge bottom by=1mm,   %
    sharp corners,
    breakable
]
    \textbf{Question}:
    
   A box contains six cards. Three of the cards are black on both sides, one card is black on one side and red on the other, and two of the cards are red on both sides.  You pick a card uniformly at random from the box and look at a random side.  Given that the side you see is red, what is the probability that the other side is red?  Express your answer as a common fraction.

    \textbf{Direct CoT}: 
    
   To solve this problem, we'll break it down into steps.
   
   Step 1: Count the total number of sides that are red.
   - There are 2 cards that are red on both sides, so each has 2 red sides. That's 2 * 2 = 4 red sides.
   - There is 1 card that is black on one side and red on the other, so it has 1 red side.
   - Total red sides = 4 + 1 = 5
   
   Step 2: Determine the total number of sides in the box.
   - There are 3 cards that are black on both sides, so each has 2 sides. That's 3 * 2 = 6 sides.
   - There are 2 cards that are red on both sides, so each has 2 sides. That's 2 * 2 = 4 sides.
   - Total sides = 6 + 4 = 10
   
   Step 3: Calculate the probability that the other side is red given that the side you see is red.
   - There are 5 red sides in total, and 5 of those are from cards that are red on both sides.
   - Since we see a red side, we have 5 options for the side we see, and 5 of those are from cards that are red on both sides.
   - The probability that the other side is red given that the side you see is red is the number of red sides from cards that are red on both sides divided by the total number of red sides.
   - Probability = Number of red sides from cards that are red on both sides / Total red sides = 5 / 5 = 1
   
   Therefore, the probability that the other side is red given that the side you see is red is \boxed{1}.

    \textbf{Explanation}:

    To solve this problem, we first need to find the total number of red sides on all the cards in the box.
    
    There are two cards that are red on both sides, so they have 4 red sides in total. 
    There is one card that is black on one side and red on the other, so it has 1 red side.
    
    In total, there are 4 + 1 = 5 red sides.
    
    Now, we need to find the probability that the other side is red given that the side we see is red. This is a conditional probability problem. 
    
    There are 5 red sides out of a total of 6 sides in the box. 
    
    So, the probability that we see a red side is 5/6.
    
    If we see a red side, it could be from the 4 red sides of the cards that are red on both sides, or from the 1 red side of the card that is black on one side and red on the other.
    
    The 4 red sides of the cards that are red on both sides are out of the 5 red sides we see.
    
    So, the probability that the other side is red, given that we see a red side, is 4/5.
    
    The answer is based on the idea of conditional probability. We know that there are 5 red sides out of a total of 6 sides in the box. When we see a red side, it could be from 4 different cards, each with 2 red sides, or from 1 card with 1 red side. Therefore, the probability that the other side is red, given that we see a red side, is 4 out of the 5 red sides.

\end{tcolorbox}

\clearpage

\subsection{Results for ExP-DPO}\label{appendix:exp-dpo}

\begin{table}[ht]
\centering %
\resizebox{0.8\textwidth}{!}{
\begin{tabular}{@{}ccccccccc@{}}
\toprule
        & \multicolumn{2}{c}{\textbf{\begin{tabular}[c]{@{}c@{}}Offline\\ LLaMA3.2-3B-Instruct\end{tabular}}}
        & \multicolumn{2}{c}{\textbf{\begin{tabular}[c]{@{}c@{}}Online\\ LLaMA3.2-3B-Instruct\end{tabular}}}
        & \multicolumn{2}{c}{\textbf{\begin{tabular}[c]{@{}c@{}}Offline\\ Qwen2.5-3B-Instruct\end{tabular}}}
        & \multicolumn{2}{c}{\textbf{\begin{tabular}[c]{@{}c@{}}Online\\ Qwen2.5-3B-Instruct\end{tabular}}} \\ \midrule
Version & $\tilde{\bmc}_+$ & $\bmc_{E_+}$  & $\tilde{\bmc}_+$ & $\bmc_{E_+}$  & $\tilde{\bmc}_+$ & $\bmc_{E_+}$  & $\tilde{\bmc}_+$ & $\bmc_{E_+}$ \\ 
\midrule
MATH  & 45.7             & 38.7        & 50.2             & 53.6        & 54.3                 & 43.7             & 60.4               & 49.3       \\
GSM8K  & 71.2             & 63.7        & 81.5             & 74.4        & 80.1                 & 69.6             & 85.4               & 76.3       \\
\bottomrule
\end{tabular}
}
\caption{Pass@4 performance on MATH and GSM8K with models trained with ExP-DPO $\tilde{\bm{c}}_+$ and expert CoT $\bm{c}_{E+}$. ExP-DPO consistently outperformed $\bmc_E$ trained policy in both online and offline settings.}
\label{dpo-table}
\end{table}

We evaluate two variants of ExP-DPO: an offline and an iteratively updated online version. %
For offline,  
we first compare the impact of using expert CoTs ($\bm{c}_E$) versus self-generated explanations ($\tilde{\bm{c}}$) as the preferred response. As shown in Table~\ref{dpo-table}, training with $\bm{c}_E$ leads to only marginal improvements in downstream performance. This observation aligns with our earlier analysis: although expert CoTs are semantically correct, they often lie in low-probability regions under the current policy $\pi_\theta$, violating the in-distribution criterion (Property~\ref{property:in-distribution}). Consequently, they induce misaligned gradient signals. Moreover, from visualization of its loss (see Appendix for details), we observe that even minimal increases in the log-probability of $\bm{c}_E$ are sufficient to yield a near-zero loss, creating the illusion of rapid convergence.
Therefore, we transition to the online ExP-DPO setting, where the preferred explanation $\tilde{\bm{c}}$ is regenerated periodically from the current policy. These improvements underscore the importance of maintaining distributional alignment between positive training samples and the evolving policy. By continuously adapting $\tilde{\bm{c}}$ to reflect the model's current beliefs, ExP-DPO ensures that learning remains anchored in the regions where the model can most effectively improve, thereby enabling robust optimization without relying on external expert demonstrations.

\begin{figure}[H]
\centering
\begin{subfigure}[b]{0.475\linewidth}
    \includegraphics[width=\linewidth]{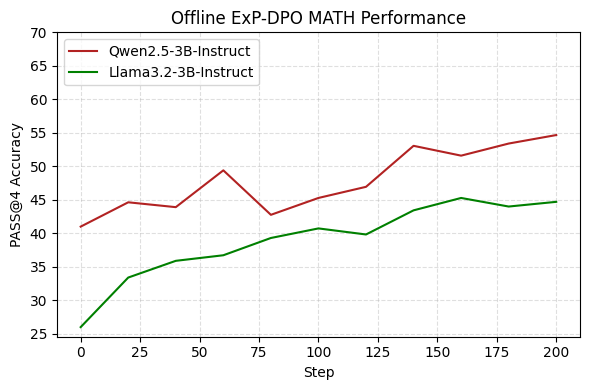}
\end{subfigure}
\hfill
\begin{subfigure}[b]{0.475\linewidth}
    \includegraphics[width=\linewidth]{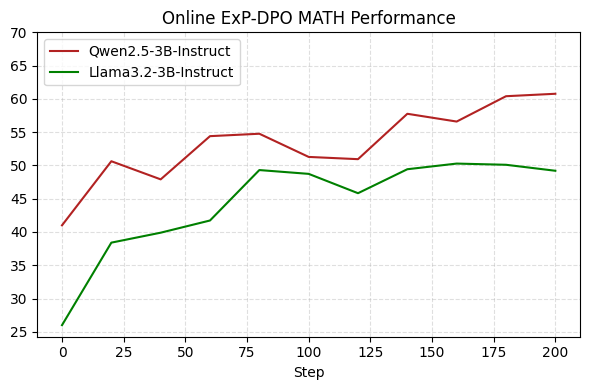}
\end{subfigure}
\caption{We compare ExP-DPO performance on the MATH dataset across training steps for two base models: Qwen2.5-3B-Instruct and Llama3.2-3B-Instruct. Online ExP-DPO consistently outperforms its offline counterpart, confirming that updating the explanation-based positive samples improves learning efficiency and final accuracy. Qwen2.5 shows higher sample efficiency and peak accuracy than Llama3.2 under both settings.}
\label{fig:DPO_MATH}
\end{figure}

\begin{figure}[H]
\centering
\begin{subfigure}[b]{0.475\linewidth}
    \includegraphics[width=\linewidth]{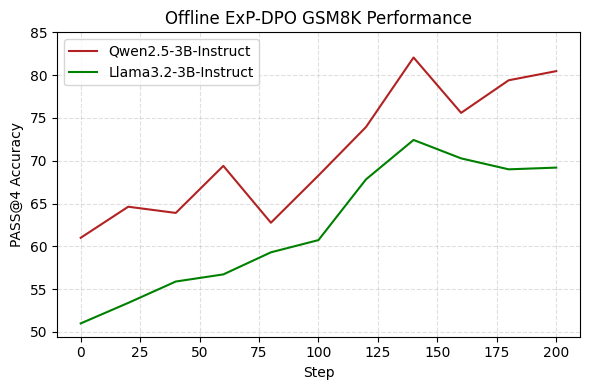}
\end{subfigure}
\hfill
\begin{subfigure}[b]{0.475\linewidth}
    \includegraphics[width=\linewidth]{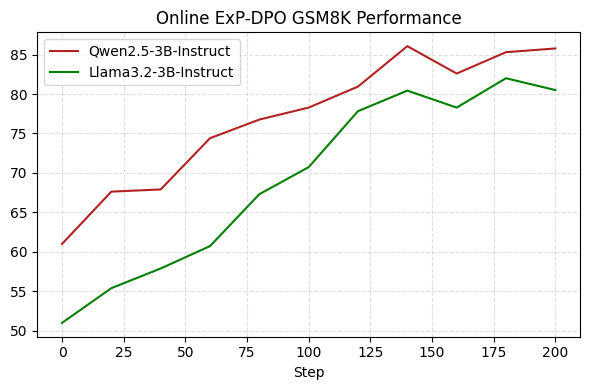}
\end{subfigure}
\caption{ExP-DPO performance on the GSM8K dataset for Qwen2.5-3B-Instruct and Llama3.2-3B-Instruct models. Online ExP-DPO achieves stronger performance and faster convergence compared to the offline setting. Qwen2.5 benefits more from the online explanation updates, attaining over 85\% accuracy, while Llama3.2 saturates earlier.}
\label{fig:DPO_GSM8K}
\end{figure}

\subsection{Deceptive Loss Dynamics in DPO Training}

\begin{figure}[H]
\centering
\includegraphics[width=0.5\linewidth]{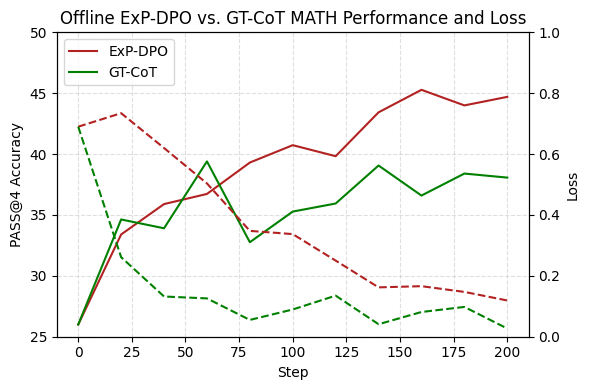}
\caption{
Training curves on the MATH dataset using LLaMA-3.2-3B-Instruct. We compare standard DPO (with ground-truth answer as the preferred response) against our ExP-DPO method. The solid lines denote model performance (PASS@4 accuracy), while the dashed lines represent the loss. We observe that under standard DPO, the training loss rapidly decreases to a low value; however, this does not translate into improved downstream performance, revealing a phenomenon of \emph{deceptively low loss}. In contrast, ExP-DPO exhibits a smoother loss decay and leads to significantly higher task accuracy. This demonstrates that ExP-DPO provides a more reliable learning signal and avoids the overfitting or shortcut behaviors often associated with poorly aligned positive preference supervision.}
\label{fig:loss_false_0}
\end{figure}

\subsection{Choice of $\beta$ for ExP-GRPO}
In our implementation of ExP-GRPO, the guidance term of self-explanations on the ground-truth answer is incorporated into the overall objective via a scaling coefficient $beta$:
$\beta \log \pi_\theta(\tilde{\bm{c}},a^{\star} | q)$
This term is designed to provide an additional learning signal when the model’s own sampled responses fail to yield effective gradients, particularly in challenging reasoning settings. We fix $\beta=0.04$ in all reported experiments. To justify this design, we conducted an ablation study across multiple $\beta$ values on the full MATH training set (1 epoch).

\begin{table}[!ht]
\centering
\resizebox{0.7\textwidth}{!}{
\begin{tabular}{@{}lllllll@{}}
\toprule
Model \textbackslash{} $\beta$ & 0.5    & 0.1    & 0.08   & 0.04   & 0.01   & 0      \\ \midrule
Qwen2.5-3B-Instruct            & 47.3\% & 48.6\% & 50.5\% & 68.7\% & 60.4\% & 51.6\% \\
LLaMA-3.2-3B-Instruct          & 33.5\% & 34.8\% & 35.3\% & 58.9\% & 46.6\% & 44.3\% \\ \bottomrule
\end{tabular}
}
\caption{
Accuracy of ExPO-GRPO with different $\beta$ values on the MATH training set after 1 epoch of training. We can see that $\beta=0.04$ yields to the best training result.}
\label{tab:betas}
\end{table}

From Table ~\ref{tab:betas}, we observe a clear trend: the addition of the ExP-SFT term significantly boosts performance over the baseline ($\beta=0$), particularly in early-stage training where standard GRPO fails to provide meaningful updates. However, excessively large $\beta$ values (e.g., $\beta=0.5$) degrade performance, likely due to over-reliance on imperfect guidance signals. In contrast, smaller $\beta$ values (e.g., $\beta=0.01$) yield slower learning.

Thus, $\beta=0.04$ offers a favorable trade-off between initial learning efficiency and final performance. It balances the model’s ability to leverage the structured supervision from self-generated explanations without overwhelming the exploration driven by the advantage term in GRPO. These results empirically support our design choice and highlight the complementary role of the ExP-SFT term in bootstrapping reasoning capability where standard policy optimization struggles.

\subsection{ExPO-GRPO results across different base model sizes}
\begin{table}[!ht]
\centering
\resizebox{\textwidth}{!}{
\begin{tabular}{@{}lcccccc@{}}
\toprule
                & \textbf{Qwen2.5-1.5B-It} & \textbf{LLaMA-3.2-1B-It} & \textbf{Qwen2.5-3B-It} & \textbf{LLaMA-3.2-3B-It} & \textbf{Qwen2.5-7B-It} & \textbf{LLaMA-3.1-8B-It} \\ \midrule
ExP-GRPO        & 45.2\%                & 40.5\%                & 68.7\%              & 58.9\%                & 83.2\%                     & 70.9\%                       \\
GRPO SFT-GT-CoT & 38.8\%                & 35.8\%                & 61.9\%              & 48.2\%                & 79.8\%                     & 64.9\%                       \\
GRPO            & 32.5\%                & 28.3\%                & 50.3\%              & 44.4\%                & 78.1\%                     & 63.9\%                       \\ \bottomrule
\end{tabular}
}
\caption{
Accuracy of ExPO-GRPO across different base model sizes on the MATH test set. For >7B model, we use the LoRA parameter-efficient fine-tuning.
ExPO consistently outperforms other baselines across all tested scales.}
\label{tab:math_sizes}
\end{table}

To better demonstrate the generality and robustness of ExPO, we conducted a broader evaluation across model scales. The results show that ExPO performance gains hold even in >=7B models, suggesting that ExPO's design, which is grounded in in-distribution gradient alignment and positive learning signal, scales effectively with model capacity. These results strengthen our claim that ExPO is a scalable and general reinforcement learning method for reasoning, even in the absence of expert CoT supervision.

\subsection{Additional Result for ExP-GRPO}

\begin{figure}[H]
\centering
\begin{subfigure}[b]{0.475\linewidth}
    \includegraphics[width=\linewidth]{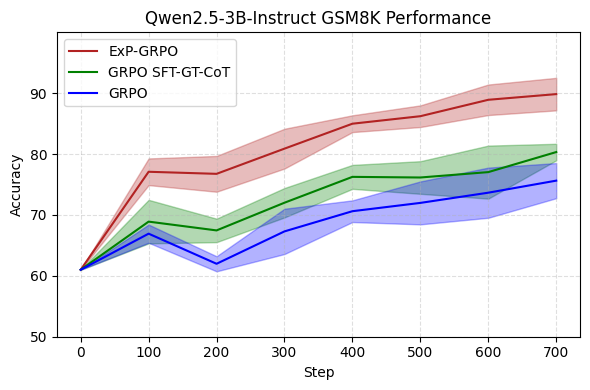}
\end{subfigure}
\hfill
\begin{subfigure}[b]{0.475\linewidth}
    \includegraphics[width=\linewidth]{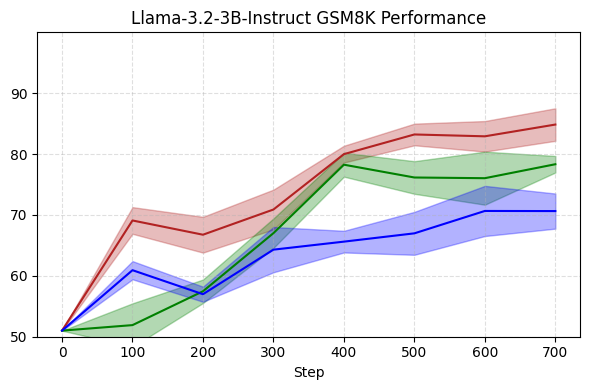}
\end{subfigure}
\caption{Test accuracy on GSM8K for Qwen2.5-3B-Instruct (left) and LLaMA-3.2-3B-Instruct (right) across global training steps. \textcolor{red}{\textbf{ExP-GRPO}} consistently surpasses both \textcolor{blue}{\textbf{GRPO}} and \textcolor{green!50!black}{\textbf{GRPO SFT-GT-CoT}}, the latter of which incorporates supervised fine-tuning on expert CoT $\bm{c}_E$. These results highlight the effectiveness of \textcolor{red}{\textbf{ExP-GRPO}} in delivering stronger and more generalizable learning signals, resulting in improved sample efficiency and superior final performance.}
\label{fig:gsm8k_perform_exp}
\end{figure}

\end{document}